\documentclass[twoside,leqno,twocolumn]{article}

\usepackage[letterpaper]{geometry}

\usepackage{siamproceedings}

\usepackage[T1]{fontenc}
\usepackage{amsfonts}
\usepackage{epstopdf}
\usepackage{enumitem}
\ifpdf
  \DeclareGraphicsExtensions{.eps,.pdf,.png,.jpg}
\else
  \DeclareGraphicsExtensions{.eps}
\fi

\newsiamremark{remark}{Remark}
\newsiamremark{hypothesis}{Hypothesis}
\crefname{hypothesis}{Hypothesis}{Hypotheses}
\newsiamthm{claim}{Claim}

\usepackage{amsmath,amssymb,amsfonts}
\usepackage{graphicx}
\usepackage{booktabs}
\usepackage{xcolor}
\usepackage{array}
\usepackage{textcomp}
\usepackage{hyperref}
\usepackage{url}
\usepackage{multirow}
\usepackage{algpseudocode}

\usepackage{amsopn}

\begin{document}

\title{\Large Spatial Entropy based Partitioning for Spatiotemporal Graph Unlearning}

\author{Qiming Guo \footnotemark[3] 
\and
Wenbo Sun
\thanks{Delft University of Technology, Netherlands
(\email{w.sun-2@tudelft.nl}).}
\and
Ye Wang
\thanks{Biogen, Cambridge, MA, USA
(\email{ye.wang@biogen.com}).}
\and
Wenlu Wang 
\thanks{Texas A\&M University - Corpus Christi, Corpus Christi, TX, USA (\email{qguo2@islander.tamucc.edu}, \email{wenlu.wang@tamucc.edu}).}
}

\date{}

\maketitle

\fancyfoot[R]{\scriptsize{Accepted at the SIAM International Conference on Data Mining (SDM 2026)}}

\begin{abstract} \small\baselineskip=9pt

Spatiotemporal graphs underpin applications such as traffic forecasting, weather forecasting, and healthcare monitoring. Privacy regulations such as the GDPR and the CCPA require the complete removal of unauthorized data from trained models, but achieving this on a spatiotemporal graph is difficult: because information propagates globally through both spatial and temporal message passing, fully erasing a node's influence forces costly full-graph retraining. ST-graph unlearning requires both exactness and efficiency. We propose IsleNet, which uses spatial-entropy-guided partitioning to create balanced, locally coherent subgraphs and reconnects them with lightweight virtual edges. Upon an unlearning request, only the affected subgraph encoder and virtual-edge layer are retrained, ensuring exact removal with low cost. Experiments on four real-world benchmarks show that IsleNet attains up to 94\% of full-graph accuracy while reducing unlearning time by up to an order of magnitude. Our code is publicly available at \url{https://github.com/wenlu-lab/STGraphUnlearning}.
\end{abstract}

\textbf{Key words:} machine unlearning, graph unlearning, spatiotemporal graph unlearning, spatial entropy, graph neural networks, privacy-preserving learning.

\section{Introduction.}
Privacy regulations such as the GDPR~\cite{eu_gdpr_2016} and the CCPA~\cite{ca_ccpa_2018} grant individuals the right to have their data erased from trained machine learning models. For spatiotemporal graph neural networks (ST-GNNs), which underpin applications from traffic forecasting to healthcare monitoring, honoring such requests is particularly difficult. To fully erase the influence of a requested node, the standard recourse is to retrain the entire ST-graph from scratch, because every node's features propagate globally through both spatial and temporal message passing, and any residual path can leak the deleted data back into the model. But full retraining has two severe consequences (Figure~\ref{figure1}): it breaks the information-propagation walks that hold the ST-graph together and can even structurally isolate nodes that were never requested for deletion, and it costs as much time and compute as training the original model. Spatiotemporal graph unlearning therefore requires simultaneously meeting two goals: exactness, meaning the deleted data's influence must be completely removed, and efficiency, meaning the process must be fast without sacrificing post-unlearning accuracy.

A natural remedy is to partition the graph into independent subgraphs, train each in isolation, and retrain only affected partitions upon deletion. Spectral clustering~\cite{Yu2018STGCN} groups topologically tight nodes into locally coherent subgraphs (Figure~\ref{figure2}), letting each encoder exploit dense local message passing. The remaining challenge is balance: if one subgraph collapses to a trivially small cluster its encoder degenerates. We use spatial entropy as the verification criterion, since a well-balanced partition produces subgraphs of comparable entropy, and our normalized partition entropy $\bar{H}_{\mathcal{P}}$ triggers redistribution when balance fails.

We propose IsleNet (Island Network for Unlearning Spatiotemporal Graphs). In Stage~1, the ST-graph is partitioned into $M$ entropy-verified subgraphs using spatial-entropy-guided partitioning, each trained by an independent encoder with gradient-isolated optimizers. In Stage~2, encoders are frozen and lightweight virtual edges are optimized to compensate for severed cross-subgraph dependencies. Because the virtual edges are structurally decoupled from the encoders, an unlearning request only retrains the affected subgraph encoder and the virtual-edge layer, achieving exact localized deletion. This design simultaneously satisfies both exactness and efficiency on spatiotemporal graphs. Our main contributions are:

% \begin{figure*}[htbp]
\begin{figure*}[h]
    \centering
    \includegraphics[width=.99\linewidth]{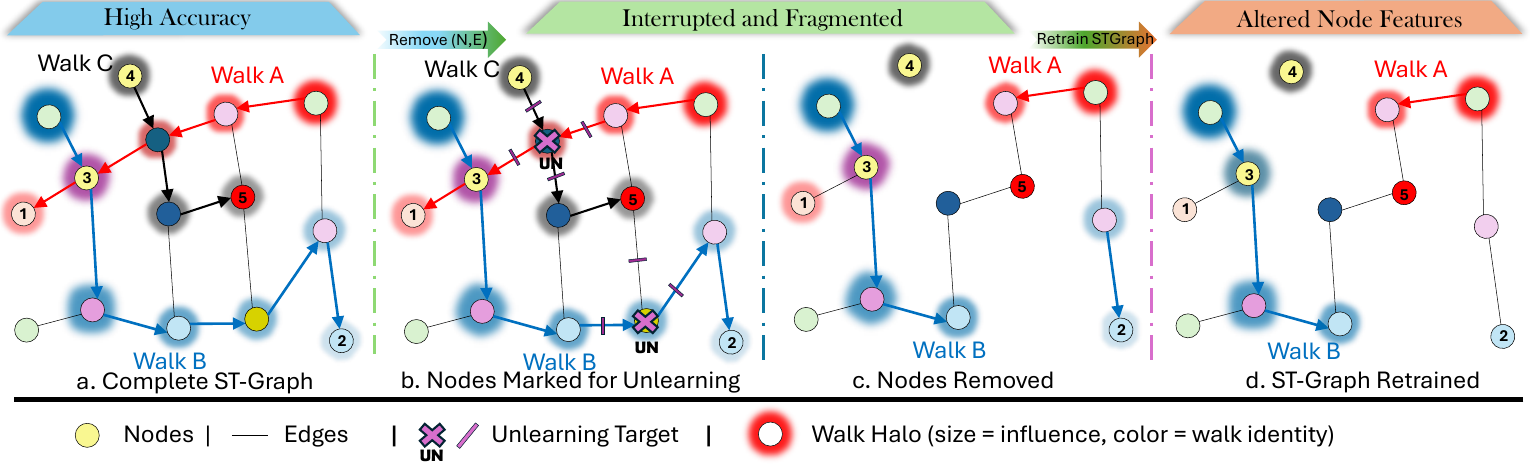}
\caption{\textbf{The challenge of unlearning on spatiotemporal graphs.} (a) In a complete ST-graph, walks propagate node influence globally (e.g., $v_4$ reaches $v_5$ via Walk~C; $v_3$ lies on Walks A and B). (b) Target nodes are marked for unlearning (\textbf{UN}), blocking every walk through them. (c) After removal, Walk~A no longer reaches $v_1$ or $v_3$, Walk~B no longer reaches $v_2$, and uninvolved $v_4$ loses all outgoing paths, a structural cascade. (d) Retraining the fragmented graph alters the remaining node features ($v_1, v_2, v_3, v_5$) and fades the walk halos that encode global context, while costing as much as full training. ST-graph unlearning must therefore be \emph{exact} (deleted influence fully removed) and \emph{efficient} (fast without sacrificing post-unlearning accuracy).}

% \caption{\textbf{The challenge of unlearning on spatiotemporal graphs.} To fully erase the influence of target nodes and edges, conventional unlearning methods must retrain the entire ST-graph from scratch, which causes three fundamental problems. First, information flow is broken: (a) in a complete ST-graph, walks propagate node influence globally (e.g., node $v_4$ transmits information to $v_5$ via Walk~C; $v_3$ lies at the intersection of Walks A and B). (b) Once target nodes are marked for removal (\textbf{UN}) and erased (c), every walk passing through them is severed. Walk~A no longer reaches $v_1$ or $v_3$, Walk~B no longer reaches $v_2$, and Walk~C can no longer carry $v_4$'s signal to $v_5$. Second, uninvolved nodes become isolated: $v_4$ itself was never in the deletion set, yet loses all its outgoing paths, a cascade effect that damages the ST-graph structurally. Third, full retraining is costly: (d) retraining the fragmented graph distorts the remaining features ($v_1, v_2, v_3, v_5$) and strips away global context (halos largely vanish), while consuming large amounts of time and compute. These observations define the two core requirements for ST-graph unlearning: (i) exactness, meaning the influence of the deleted nodes and edges must be fully removed, and (ii) efficiency, meaning unlearning must be fast without sacrificing post-unlearning accuracy. IsleNet is designed to meet both.}
    \label{figure1}\vspace{-15pt}
\end{figure*}

\noindent\textbf{1. Spatial entropy as a partition balance criterion.} We propose a new concept ``spatial entropy'' as a principled measure for verifying that spectral partitioning produces balanced, non-degenerate subgraphs, each preserving dense local connectivity so that its encoder can fully exploit local spatiotemporal correlations.

\noindent\textbf{2. Virtual edges for decoupled global integration.} We design a two-stage training protocol in which independently trained subgraph encoders are connected through lightweight virtual edges. The strict gradient isolation between stages guarantees exact unlearning: retraining is confined to the affected subgraph and the virtual-edge parameters.
    
\noindent\textbf{3. Comprehensive empirical validation.} Across four real-world datasets spanning 23 to 3{,}220 nodes and four ST-GNN backbones, IsleNet attains up to 94\% of full-graph accuracy while reducing unlearning time by up to 12$\times$, outperforming five baselines.

\section{Related Work.} \label{sec:related} 

A Spatiotemporal Graph $\mathcal{G} = (\mathcal{V}, \mathcal{E}, \mathbf{X})$ has $N$ nodes with adjacency matrix $\mathbf{A} \in \mathbb{R}^{N \times N}$ and features $\mathbf{X} \in \mathbb{R}^{T \times N \times F}$ spanning $T$ timesteps. ST-GNNs such as STGCN~\cite{Yu2018STGCN}, ST-GAT~\cite{Velickovic2018GAT}, and DCRNN~\cite{Li2018DCRNN} couple spatial message passing with recurrent or convolutional temporal operators, producing per-node forecasts. This coupling is precisely what makes unlearning expensive: each node's features propagate through both dimensions, so a single deletion can require full retraining.

General-purpose unlearning falls into two camps. Exact methods retrain affected subsets of the model: SISA~\cite{bourtoule2021machine} partitions data into independent shards so only affected shards are retrained; Ginart et al.~\cite{ginart2019making} derive deletion-efficient $k$-means variants. Approximate methods modify trained parameters directly: certified removal~\cite{guo2020certified} uses Newton updates with differential-privacy noise, and weight scrubbing~\cite{golatkar2020eternal} perturbs weights along Fisher-information directions. Exact methods offer formal guarantees; approximate methods are faster but lack provable deletion. GraphEraser~\cite{chen2022grapheraser} pioneered graph unlearning through balanced partitioning (BLPA/BEKM) with learned aggregation; GUIDE~\cite{wang2023inductive} extends this with fairness-aware dynamic partitioning. GNNDelete~\cite{cheng2023gnndelete} avoids partitioning by learning layer-wise deletion operators, while PROJECTOR~\cite{cong2023projector} provides exact deletion for linear GNNs via orthogonal projection. These methods target static graph node classification; none address spatiotemporal forecasting, where temporal message passing amplifies residual influence. For spatiotemporal settings, STEPS~\cite{guo2025efficient} applies spectral partitioning with weighted aggregation to ST-GNN forecasting, handling up to 15\% node removal with order-of-magnitude speedups, but lacks formal deletion proofs and requires preset partition scales. Graph Revoke~\cite{zhang2025dynamic} unlearns dynamic edges in continuous-time graphs via ML-Mixer gradient transformation, achieving 7.23$\times$ speedups on link prediction, but offers no theoretical deletion guarantees and may face scalability bottlenecks.

Information-theoretic measures have a long history in graph analysis~\cite{dehmer2011history,korner1973coding,li2016structural} and in partition quality assessment~\cite{rosvall2008maps,dhillon2004kernel}. Spatial entropy specifically has been used in geographic information science to measure the evenness of point distributions~\cite{batty1974spatial,leibovici2009defining}. To our knowledge, no prior work applies entropy-based criteria to verify partition quality in spatiotemporal graph unlearning.

\section{Method.}
\label{sec:method}

IsleNet partitions the ST-graph into locally coherent subgraphs that preserve dense internal connectivity, with spatial entropy verifying that no subgraph degenerates. Lightweight virtual edges, structurally decoupled from the encoders, compensate for severed long-range dependencies. A two-stage protocol first trains each encoder in isolation, then freezes them and trains only the virtual edges, confining each node's influence to one subgraph so that unlearning retrains only the affected encoder and the virtual-edge layer.

\begin{figure}[t]
    \centering
    \includegraphics[width=.8\linewidth]{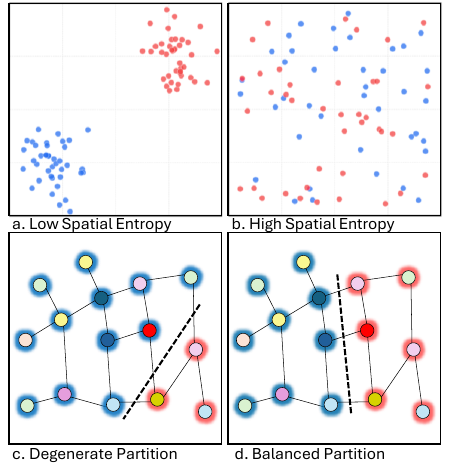}
\caption{\textbf{Spatial entropy in IsleNet.}
\textbf{Top:} Spatial entropy $H_s$ (Eq.~\ref{eq:spatial_entropy}) quantifies how a subgraph's nodes are distributed across space. (a)~Low: nodes concentrate in narrow regions. (b)~High: nodes spread uniformly.
\textbf{Bottom:} IsleNet uses spatial entropy to verify partition quality: blue and red halos mark the two subgraphs, and the dashed line separates them. (c)~A degenerate partition: one subgraph shrinks to a small corner, so IsleNet detects the imbalance and redistributes nodes. (d)~A balanced partition: both subgraphs cover comparable spatial extents.}
    \label{figure2}\vspace{-15pt}
\end{figure}

\begin{figure*}[t]
    \centering
    \includegraphics[width=0.99\linewidth]{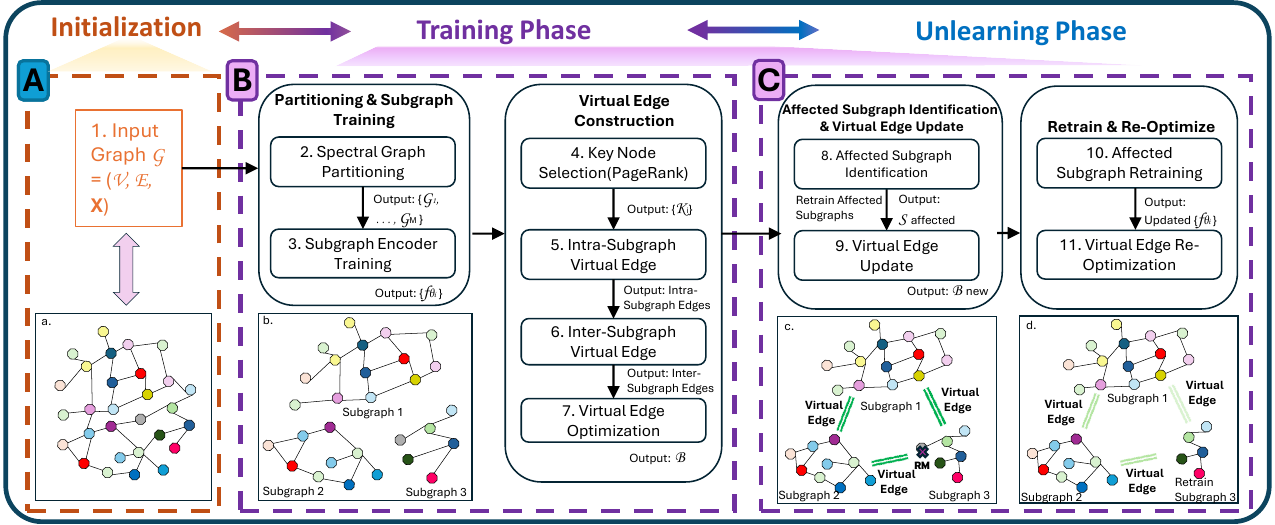}
\caption{\textbf{Framework of IsleNet.} 
(a) The original graph. 
(b) Split into smaller subgraphs, each with its own encoder. 
(c) Virtual edges (green) reconnect subgraphs; on a deletion request, edges touching the target are marked for removal (RM). 
(d) Only the affected subgraph is retrained and virtual edges are rebuilt; other subgraphs stay untouched.
\textbf{How it works:} Spatial entropy guides the split so no subgraph degenerates. Each encoder trains in isolation, and virtual edges stitch subgraphs back together. To unlearn, IsleNet retrains only the affected encoder and rebuilds the virtual edges from scratch. Untouched subgraphs keep their weights, making unlearning fast and exact.}
    \label{figure3}\vspace{-15pt}
\end{figure*}

\subsection{Problem Formulation. }

Consider a spatiotemporal graph $\mathcal{G} = (\mathcal{V}, \mathcal{E}, \mathbf{X})$, where $\mathcal{V}$ is a set of $N$ nodes, $\mathcal{E}$ encodes edges in a binary adjacency matrix $\mathbf{A} \in \{0,1\}^{N \times N}$, and $\mathbf{X} \in \mathbb{R}^{T \times N \times F}$ contains temporal features of length $T$ with $F$-dimensional features per node. A GNN model $f_{\theta}: (\mathbf{X}, \mathbf{A}) \mapsto \mathbf{Y}$ predicts future node states $\mathbf{Y} \in \mathbb{R}^{N \times H}$ over a horizon $H$. The unlearning task is to remove a subset of nodes $\mathcal{U}_N \subset \mathcal{V}$ and their edges $\mathcal{U}_E = \{(u,v) \in \mathcal{E} : u \in \mathcal{U}_N \text{ or } v \in \mathcal{U}_N\}$, producing a new model $f_{\theta'}$ that approximates the performance of a model retrained from scratch on the remaining graph $\mathcal{G}' = (\mathcal{V} \setminus \mathcal{U}_N, \mathcal{E} \setminus \mathcal{U}_E, \mathbf{X}')$.

Formally, given a loss function $\mathcal{L}$ (e.g., L1 loss), the unlearning objective is:
\begin{equation}
\theta' \approx \arg\min_{\theta} \mathbb{E}_{(\mathbf{X}',\mathbf{Y}')} \mathcal{L}(f_{\theta}(\mathbf{X}', \mathbf{A}'), \mathbf{Y}'),
\label{eq:unlearn_objective}
\end{equation}

where $\mathbf{A}'$ is the adjacency matrix of $\mathcal{G}'$, and $\mathbf{X}', \mathbf{Y}'$ exclude data associated with $\mathcal{U}_N$. Following the SISA framework~\cite{bourtoule2021machine}, we say that unlearning is exact if the post-unlearning model $f_{\theta'}$ is statistically indistinguishable from a model retrained from scratch under the same partition and training protocol on $\mathcal{G}'$. IsleNet satisfies this condition because gradient isolation (Eq.~\ref{eq:gradient_isolation}) ensures that no encoder parameter outside the affected subgraph was ever influenced by the deleted nodes, and the virtual-edge layer is re-initialized and re-trained on the remaining data.

\subsection{IsleNet Architecture. }

IsleNet decomposes the graph into $M$ independent subgraphs and integrates them via a virtual edge mechanism to balance local isolation and global information flow. The architecture comprises three components: spatial-entropy-guided partitioning, subgraph-independent spatiotemporal encoding, and virtual edges.

\subsubsection{Spatial-Entropy-Guided Partitioning. }
\label{sec:partition}

We partition $\mathcal{G}$ into $M$ disjoint subgraphs $\{\mathcal{G}_1, \ldots, \mathcal{G}_M\}$, where $\mathcal{G}_i = (\mathcal{V}_i, \mathcal{E}_i, \mathbf{X}_i)$, $\mathcal{V}_i \cap \mathcal{V}_j = \emptyset$ for $i \neq j$, and $\bigcup_i \mathcal{V}_i = \mathcal{V}$. A key insight of IsleNet is that locally coherent subgraphs with dense internal connectivity allow each encoder to fully exploit spatiotemporal correlations through GCN message passing. However, the partition must also be balanced: no subgraph should degenerate to a trivially small or collapsed cluster. We use spatial entropy to formalize and verify this balance property.

We define \textbf{Spatial Entropy} by discretizing the spatial domain of $\mathcal{G}$ into $R$ non-overlapping regions (e.g., grid cells based on node coordinates). For any node subset $\mathcal{V}' \subseteq \mathcal{V}$, let $p_r(\mathcal{V}') = |\{v \in \mathcal{V}' : \mathbf{s}_v \in \text{region } r\}| / |\mathcal{V}'|$ denote the fraction of nodes falling in region $r$. The spatial entropy of $\mathcal{V}'$ is defined as:
\begin{equation}
H_s(\mathcal{V}') = -\sum_{r=1}^{R} p_r(\mathcal{V}') \log p_r(\mathcal{V}'),
\label{eq:spatial_entropy}
\end{equation}
where $0 \log 0 \triangleq 0$. The maximum entropy $H_s^{\max} = \log R$ is attained when nodes are uniformly distributed across all regions.

For a partition $\mathcal{P} = \{\mathcal{V}_1, \ldots, \mathcal{V}_M\}$, we define the \textbf{partitioning entropy} as the minimum spatial entropy across all subgraphs, together with its normalized form:
\begin{align}
H_{\mathcal{P}} &= \min_{i \in \{1,\ldots,M\}} H_s(\mathcal{V}_i),
\label{eq:partition_entropy} \\
\bar{H}_{\mathcal{P}} &= \frac{H_{\mathcal{P}}}{H_s(\mathcal{V})}.
\label{eq:normalized_entropy}
\end{align}
$\bar{H}_{\mathcal{P}}$ ranges from 0 (a degenerate partition where one subgraph has collapsed) to 1 (all subgraphs have comparable spatial extent). We accept a partition when $\bar{H}_{\mathcal{P}} \geq 1-\epsilon$; otherwise balanced redistribution is triggered (detailed below).

%\paragraph{Partitioning Objectives. }
IsleNet seeks \textbf{a partition that minimizes inter-subgraph edge cuts} (preserving local connectivity) subject to balance constraints verified by partition entropy:
\begin{equation}
\min_{\mathcal{P}} \sum_{i=1}^M \sum_{\substack{u \in \mathcal{V}_i \\ v \notin \mathcal{V}_i}} \mathbf{A}_{uv} \quad \text{s.t.} \quad \left| |\mathcal{V}_i| - \frac{N}{M} \right| \leq \delta, \;\; \bar{H}_{\mathcal{P}} \geq 1-\epsilon,
\label{eq:partition}
\end{equation}
where $\delta$ controls size tolerance and $\bar{H}_{\mathcal{P}}$ (defined in Eq.~\ref{eq:normalized_entropy}) is the normalized partition entropy. Algorithmically, we first run spectral clustering on the graph Laplacian $\mathbf{L} = \mathbf{D} - \mathbf{A}$ with balance constraint $|\mathcal{V}_i| \approx N/M$, producing an initial partition $\mathcal{P}^{(0)}$. We then compute $\bar{H}_{\mathcal{P}^{(0)}}$ over a discretized grid on the node coordinates. If $\bar{H}_{\mathcal{P}^{(0)}} \geq 1-\epsilon$ the partition is accepted; otherwise, a balanced redistribution procedure (below) is invoked to produce a spatially non-degenerate partition. We use $\epsilon = 0.1$ throughout.

When $\bar{H}_{\mathcal{P}^{(0)}} < 1-\epsilon$, we iteratively rebalance: at each step, we identify the subgraph $\mathcal{V}_{i^*}$ with the lowest spatial entropy, select the node farthest (in Euclidean distance) from its centroid, and reassign it to the subgraph with highest entropy among those with $|\mathcal{V}_j| < N/M + \delta$; then recompute $\bar{H}_{\mathcal{P}}$. The procedure terminates when $\bar{H}_{\mathcal{P}} \geq 1-\epsilon$ or after three iterations; on failure we fall back to a size-balanced random partition. To quantify \textbf{balance across subgraphs} beyond the minimum criterion, we also monitor the entropy variance $\sigma^2_H = \tfrac{1}{M} \sum_{i=1}^M ( H_s(\mathcal{V}_i) - \bar{H}_s )^2$, where $\bar{H}_s = \tfrac{1}{M}\sum_i H_s(\mathcal{V}_i)$. A partition is \emph{spatially adequate} if both $\bar{H}_{\mathcal{P}} \geq 1-\epsilon$ (non-degeneracy) and $\sigma^2_H \leq \tau$ (low imbalance). IsleNet's spectral clustering with balance constraints satisfies both on all benchmarks with $\epsilon \leq 0.1$ and $\tau \leq 0.01$.

\begin{proposition}[Representational Fidelity]
\label{prop:fidelity}
If a partition $\mathcal{P}$ satisfies $H_s(\mathcal{V}_i) \geq H_s(\mathcal{V}) - \epsilon$ for all $i$, then each subgraph $\mathcal{G}_i$ occupies at least a fraction $(1-\epsilon)$ of the spatial regions of $\mathcal{G}$, so no subgraph is spatially degenerate and every encoder receives a non-trivial portion of the graph's extent.
\end{proposition}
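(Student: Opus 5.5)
The plan is to use the elementary fact that Shannon entropy is bounded by the logarithm of the support size, and then exponentiate the hypothesis $H_s(\mathcal{V}_i)\ge H_s(\mathcal{V})-\epsilon$. For a node subset $\mathcal{V}'$, let $S(\mathcal{V}')=\{r : p_r(\mathcal{V}')>0\}$ be the set of regions it occupies. Let $N_{\mathrm{eff}}(\mathcal{V}')=\exp(H_s(\mathcal{V}'))$ be its effective number of occupied regions (the perplexity of $p(\mathcal{V}')$). By Jensen's inequality applied to Eq.~\ref{eq:spatial_entropy},
\[
H_s(\mathcal{V}')\le \log|S(\mathcal{V}')|, \qquad\text{so}\qquad N_{\mathrm{eff}}(\mathcal{V}')\le |S(\mathcal{V}')|,
\]
with equality exactly when $\mathcal{V}'$ is spread uniformly over its support.

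The core step is to exponentiate the hypothesis. This gives
\[
N_{\mathrm{eff}}(\mathcal{V}_i)\ \ge\ e^{-\epsilon}N_{\mathrm{eff}}(\mathcal{V})\ \ge\ (1-\epsilon)\,N_{\mathrm{eff}}(\mathcal{V}),
\]
using $e^{-\epsilon}\ge 1-\epsilon$. Combining this with the support bound yields $|S(\mathcal{V}_i)|\ge (1-\epsilon)N_{\mathrm{eff}}(\mathcal{V})$. So in the entropy-weighted sense, each subgraph occupies at least a $(1-\epsilon)$ fraction of the regions that $\mathcal{G}$ effectively occupies. In particular, since $N_{\mathrm{eff}}(\mathcal{V})\ge 1$, and since the bound is strictly positive whenever $\epsilon<1$, no $\mathcal{V}_i$ can collapse to a single region when $\mathcal{G}$ spans several. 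This is the non-degeneracy claim: every encoder sees a non-trivial spatial extent. Finally, I would note that the condition $\bar H_{\mathcal{P}}\ge 1-\epsilon$ used in the algorithm gives $H_s(\mathcal{V}_i)\ge H_s(\mathcal{V})-\epsilon H_s(\mathcal{V})$. The same argument then applies with $\epsilon$ replaced by $\epsilon H_s(\mathcal{V})$, so the proposition covers the accepted partitions up to this rescaling of $\epsilon$.

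The main obstacle is upgrading ``effective regions'' to the literal count $|S(\mathcal{V})|$ of occupied regions. This step genuinely needs a near-uniformity assumption on $\mathcal{G}$ itself. If $\mathcal{V}$ places almost all its mass in one region plus a trace in many others, then $H_s(\mathcal{V})$ is small, the hypothesis is easily met, and yet a subgraph may occupy only a tiny fraction of $S(\mathcal{V})$. To handle this, I would state the uniformity requirement explicitly: assume $H_s(\mathcal{V})\ge \log|S(\mathcal{V})|-\eta$, which holds for grids chosen so that node density is roughly even. Under that assumption the chain above gives $|S(\mathcal{V}_i)|\ge e^{-(\epsilon+\eta)}|S(\mathcal{V})|$, and this reduces to the stated $(1-\epsilon)$ fraction when $\eta=0$, i.e., when $H_s(\mathcal{V})=\log|S(\mathcal{V})|$. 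Otherwise, I would present the proposition in its effective-region form, which is what the entropy hypothesis honestly controls.
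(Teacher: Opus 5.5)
Your proposal is correct and follows the paper's proof essentially step for step: bound $H_s(\mathcal{V}_i)\le\log|S(\mathcal{V}_i)|$, exponentiate the hypothesis, use near-uniformity of $\mathcal{V}$ so that $H_s(\mathcal{V})\approx\log|S(\mathcal{V})|$, and finish with $e^{-\epsilon}\ge 1-\epsilon$. Your explicit slack $\eta$ simply makes quantitative the paper's informal ``approximately uniformly covers $R^+$ occupied regions'' step. One small correction: the hypothesis rules out single-region collapse exactly when $H_s(\mathcal{V})>\epsilon$, not merely when $\epsilon<1$ and $\mathcal{G}$ spans several regions.
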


\begin{proof}
Let $R_i = |\{r : p_r(\mathcal{V}_i) > 0\}|$. Since $H_s(\mathcal{V}_i) \leq \log R_i$, we have $R_i \geq e^{H_s(\mathcal{V}_i)} \geq e^{H_s(\mathcal{V}) - \epsilon}$. When $\mathcal{V}$ approximately uniformly covers $R^+$ occupied regions, $H_s(\mathcal{V}) \approx \log R^+$, so $R_i / R^+ \geq e^{-\epsilon} \geq 1-\epsilon$.
\end{proof}

\begin{theorem}[Post-Unlearning Entropy Bound]
\label{thm:unlearn_fidelity}
Under a spatially adequate partition $\mathcal{P}$, removing a fraction $\gamma_i = |\mathcal{U}_N \cap \mathcal{V}_i|/|\mathcal{V}_i|$ of nodes from $\mathcal{V}_i$ yields
$H_s(\mathcal{V}_i \setminus \mathcal{U}_N) \geq H_s(\mathcal{V}_i) - \gamma_i \log R$,
so the retrained subgraph preserves enough spatial coverage for small-fraction unlearning.
\end{theorem}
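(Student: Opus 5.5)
The plan is to write the region distribution of $\mathcal{V}_i$ as a mixture of what remains and what is removed. Let $n=|\mathcal{V}_i|$, and let $k=|\mathcal{U}_N\cap\mathcal{V}_i|=\gamma_i n$. Let $q$ be the region distribution of $\mathcal{V}_i\setminus\mathcal{U}_N$ and $u$ the region distribution of the deleted nodes $\mathcal{U}_N\cap\mathcal{V}_i$. Counting nodes per region gives the exact identity
\begin{equation*}
p(\mathcal{V}_i)=(1-\gamma_i)\,q+\gamma_i\,u .
\end{equation*}
The standard mixture (grouping) inequality for Shannon entropy then gives
\begin{equation*}
H_s(\mathcal{V}_i)\le(1-\gamma_i)H(q)+\gamma_i H(u)+h(\gamma_i),
\end{equation*}
where $h$ is the binary entropy. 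It comes from the chain rule applied to the pair (region, removed-or-not): the joint entropy is at least $H(p)$ and equals $h(\gamma_i)$ plus the conditional entropies.

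Rearranging, and using $H(u)\le\log R$ and $H(q)\le\log R$, I expect to get
\begin{equation*}
H_s(\mathcal{V}_i\setminus\mathcal{U}_N)\;\ge\;H_s(\mathcal{V}_i)-\gamma_i\log R-h(\gamma_i).
\end{equation*}
The derivation is short. Move $(1-\gamma_i)H(q)$ to the left and write $H(q)=(1-\gamma_i)H(q)+\gamma_i H(q)$. The step $\gamma_i(H(u)-H(q))\le\gamma_i\log R$ follows from $H(u)\le\log R$ and $H(q)\ge 0$.

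The main obstacle is the extra $h(\gamma_i)$ term, and I do not think it can be dropped in full generality. Take $R=2$ and let $\mathcal{V}_i$ have $n-1$ nodes in region 1 and a single node in region 2. Deleting that single node gives $\gamma_i=1/n$ and $H_s(\mathcal{V}_i)=h(1/n)\approx(\log n)/n$. The remaining entropy is $0$, which is less than $h(1/n)-(\log 2)/n$ once $n$ is large.

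So the second step is to use the hypothesis that $\mathcal{P}$ is spatially adequate to absorb $h(\gamma_i)$. Proposition~\ref{prop:fidelity} gives that $\mathcal{V}_i$ occupies a $(1-\epsilon)$ fraction of the regions. What suffices is to know, in addition, that the deleted nodes do not isolate a region that $\mathcal{V}_i$ occupies only sparsely. Concretely, I would aim to show $\gamma_i H(u)+h(\gamma_i)-\gamma_i H(q)\le\gamma_i\log R$. This holds, for instance, when $u$ is dominated by $q$: then the KL form $H(p)=(1-\gamma_i)H(q)+\gamma_i H(u)+\mathrm{JS}_{\gamma_i}(q,u)$ applies, and the divergence term is at most $\gamma_i\log(1/(1-\gamma_i))$ rather than $h(\gamma_i)$. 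It also holds when every occupied region keeps a node after deletion. In that case the per-region mass $p_r$ changes by at most $\gamma_i$, and a direct term-by-term bound on $-x\log x$ gives the result.

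I would therefore present the clean inequality including $h(\gamma_i)$ as the unconditional result. I would then state the $\gamma_i\log R$ form under this "no occupied region is emptied" condition, which is exactly the small-fraction unlearning regime the theorem refers to, and flag the counterexample above to show that the condition is necessary.
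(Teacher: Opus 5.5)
Your unconditional bound $H_s(\mathcal{V}_i\setminus\mathcal{U}_N)\ge H_s(\mathcal{V}_i)-\gamma_i\log R-h(\gamma_i)$ is correct. Your counterexample is also correct: the inequality as stated is false. Spatial adequacy cannot save it, because taking $M$ identical copies of your $\mathcal{V}_i$ gives $\bar{H}_{\mathcal{P}}=1$ and $\sigma_H^2=0$. The paper's proof contains no idea you are missing. It simply asserts that deleting $\gamma_i|\mathcal{V}_i|$ nodes lowers the entropy by at most $\gamma_i\log R$ (``the information content of uniformly-distributed removed nodes''). That tacitly assumes the deleted nodes are distributed like the survivors, and it discards exactly the mixing term you isolated. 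Its appeal to $H_s(\mathcal{V}_i)\ge(1-\epsilon)H_s(\mathcal{V})$ only supports the qualitative closing remark. The $h(\gamma_i)$ term is in fact unavoidable in the worst case. If the survivors all lie in one region and the deleted nodes are spread evenly over the other $R-1$ regions, the drop is exactly $h(\gamma_i)+\gamma_i\log(R-1)$, which exceeds $\gamma_i\log R$ whenever $\gamma_i\le 1/2$.

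The genuine gap is in your rescue, where you claim the clean $\gamma_i\log R$ form: neither sufficient condition you propose delivers it.

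\emph{``No occupied region is emptied''.} Take $R=2$ with $98$ nodes in region~1 and $2$ in region~2, and delete one node of region~2. Then $\gamma_i=0.01$, $H_s(\mathcal{V}_i)=h(0.02)\approx 0.0980$ nats and $H_s(\mathcal{V}_i\setminus\mathcal{U}_N)=h(1/99)\approx 0.0565$, whereas $H_s(\mathcal{V}_i)-\gamma_i\log 2\approx 0.0911$. This is not an artifact of small $R$. On the paper's $10\times 10$ grid, put $9990$ nodes in one cell and $10$ in another and delete one of the $10$: the drop is about $6.95\times 10^{-4}>\gamma_i\log 100\approx 4.61\times 10^{-4}$. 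Again, $M$ copies make the partition adequate. Your term-by-term argument fails because $-x\log x$ has unbounded slope at $0$. Deleting one node from a region of mass $p_r$ lowers the entropy by roughly $\frac{1}{|\mathcal{V}_i|}\bigl(\log\frac{1}{p_r}-H_s(\mathcal{V}_i)\bigr)$. That exceeds $\frac{1}{|\mathcal{V}_i|}\log R$ whenever $p_r$ is below about $e^{-H_s(\mathcal{V}_i)}/R$, whether or not the region is emptied.

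\emph{``$u$ dominated by $q$''.} The first example also has $\mathrm{supp}\,u\subseteq\mathrm{supp}\,q$. Yet its Jensen--Shannon term is about $0.042$, far above $\gamma_i\log\frac{1}{1-\gamma_i}\approx 10^{-4}$, so this claim fails as well. With $p=p(\mathcal{V}_i)$, that term equals $(1-\gamma_i)D(q\|p)+\gamma_i D(u\|p)$. Here $D(u\|p)$ is only bounded by $\log(1/\gamma_i)$, and it is large precisely when the deleted nodes sit in low-mass regions.

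A valid conditional statement therefore has to control that concentration, for instance by bounding $D(u\|p)$, rather than region occupancy. Otherwise the honest result is your bound with the $h(\gamma_i)$ term. Your closing claim that the counterexample shows the condition is ``necessary'' is also unsupported. A single failure of the unconditional inequality says nothing about that particular condition.
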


\begin{proof}
Removing $\gamma_i |\mathcal{V}_i|$ nodes reduces entropy by at most $\gamma_i \log R$ (the information content of uniformly-distributed removed nodes). Combined with $H_s(\mathcal{V}_i) \geq (1-\epsilon) H_s(\mathcal{V})$, the post-unlearning entropy remains close to $H_s(\mathcal{V})$ for small $\gamma_i$.
\end{proof}

\subsubsection{Subgraph-Independent Spatiotemporal Encoding. }

Each subgraph $\mathcal{G}_i$ is processed by a dedicated encoder $f_{\theta_i}$ (e.g., STGCN), taking features $\mathbf{X}_i \in \mathbb{R}^{B \times T \times |\mathcal{V}_i| \times F}$ and edge index $\mathbf{E}_i \subseteq \mathcal{E}_i$, producing embeddings $\mathbf{H}_i \in \mathbb{R}^{B \times |\mathcal{V}_i| \times H}$:
\begin{equation}
\mathbf{H}_i = f_{\theta_i}(\mathbf{X}_i, \mathbf{E}_i).
\label{eq:encoder}
\end{equation}
The encoder applies graph convolutions for spatial modeling and recurrent units for temporal modeling, outputting per-node predictions. Independence is ensured by gradient isolation:
\begin{equation}
\frac{\partial \mathcal{L}_i}{\partial \theta_j} = 0 \quad \forall i \neq j,
\label{eq:gradient_isolation}
\end{equation}
where $\mathcal{L}_i$ is the loss on subgraph $i$. Isolation is enforced mechanically: each encoder $f_{\theta_i}$ has its own Adam optimizer operating on a disjoint parameter group, and $\mathcal{L}_i$ is computed only from $(\mathbf{X}_i, \mathbf{E}_i, \mathbf{y}_i)$, so the computation graph for encoder $j \neq i$ is never instantiated during a step that updates encoder $i$. Consequently, no parameter in $\theta_j$ can depend on any node in $\mathcal{V}_i$, which is the prerequisite for exact localized unlearning.

\paragraph{Gradient Isolation Bound. } For a deletion fraction $\gamma = |\mathcal{U}_N|/N$, the post-unlearning loss satisfies $\mathcal{L}_{\text{unlearn}} \leq \mathcal{L}_{\text{orig}} + O(\gamma \sum_{i \in \mathcal{S}_{\text{affected}}} \mathcal{L}_i)$, ensuring minimal error increase since $|\mathcal{S}_{\text{affected}}| \leq M$.

\subsubsection{Virtual Edge Construction. }

To restore global connectivity lost during partitioning, we introduce virtual edges comprising intra-subgraph and inter-subgraph connections, which aggregate information via attention-based embeddings. In principle, virtual edges can be parameterized by any network designed for information summarization; in this work, we adopt a multilayer perceptron. 

%\paragraph{Endpoints of Virtual Edges. } 
For each subgraph $\mathcal{G}_i$, we select the top $k_i = \max(2, \lceil 0.1 \cdot |\mathcal{V}_i| \rceil)$ structurally influential nodes $\mathcal{K}_i$ using PageRank centrality~\cite{page1999pagerank}. The PageRank vector $\mathbf{p} \in \mathbb{R}^{|\mathcal{V}_i|}$ is computed via the standard power iteration with damping factor $\alpha$, and the key node set is:
\begin{equation}
\mathcal{K}_i = \text{TopK}(\mathbf{p}, k_i).
\label{eq:pagerank_select}
\end{equation}
These key nodes serve as \textbf{endpoints for the virtual edges} that restore cross-subgraph information flow.

%\paragraph{Virtual Edge Construction. } 
Each virtual edge aggregates embeddings from a set of key nodes $\mathcal{K}_{\text{src}}$ via scaled dot-product attention followed by an MLP. Let $\mathbf{H}_{\mathcal{K}} \in \mathbb{R}^{B \times n \times D}$ collect the embeddings of $n$ selected key nodes. The virtual-edge feature is computed as:
\begin{align}
\mathbf{W} &= \text{softmax}\!\left(\frac{\mathbf{H}_{\mathcal{K}} \mathbf{H}_{\mathcal{K}}^\top}{D}\right), \label{eq:attention_w} \\
\mathbf{h}_{b} &= \text{MLP}(\text{flatten}(\mathbf{W} \mathbf{H}_{\mathcal{K}})), \label{eq:attention_mlp}
\end{align}
where $\text{MLP}: \mathbb{R}^{nD} \rightarrow \mathbb{R}^{D_b}$ is a two-layer network with ReLU activation. We instantiate two types of virtual edges: intra-subgraph edges that connect the top-2 key nodes within each subgraph ($n{=}2$, yielding $M$ edges), and inter-subgraph edges that connect the top-3 key nodes from each pair of subgraphs ($n{=}6$, yielding $\binom{M}{2}$ edges). The total number of virtual edges is $M + \binom{M}{2}$. The top-2/top-3 choice is a parameter-efficiency trade-off: the $\binom{M}{2}$ inter-subgraph edges dominate the parameter budget, so a larger receptive field there (six key nodes) is more impactful than widening the $M$ intra-subgraph edges.

Virtual-edge features are aggregated residually:
\begin{equation}
\mathbf{H}^{\text{enhanced}} = \mathbf{H} + \beta \sum_{b \in \mathcal{B}} \mathbf{P}_b \text{Proj}(\mathbf{h}_b),
\label{eq:ve_enhance}
\end{equation}
where $\mathbf{H} = [\mathbf{H}_1; \ldots; \mathbf{H}_M] \in \mathbb{R}^{B \times N \times D}$ concatenates subgraph embeddings, $\text{Proj}: \mathbb{R}^{D_b} \rightarrow \mathbb{R}^{D}$ is a linear layer, $\beta > 0$ is a residual scaling factor tuned on a validation set to balance local and global contributions (we use $\beta{=}0.1$ throughout), and $\mathbf{P}_b \in \{0,1\}^{N \times 1}$ maps virtual edge $b$ to its connected nodes. This approximates $\mathbf{A}$ with a low-rank structure, ensuring efficient global information flow.

The attention weights $\mathbf{W}$ (Eq.~\ref{eq:attention_w}) are row-stochastic with $\|\mathbf{W}\|_2 \leq 1$, ensuring \textbf{stable information propagation}. The $M + \binom{M}{2}$ virtual edges approximate the full adjacency $\mathbf{A}$ with only $O(M^2)$ parameters, maintaining global connectivity at low cost.

\subsection{Two-Stage Training Protocol. }
Stage~1 trains each encoder $f_{\theta_i}$ independently to minimize an $L_1$ loss on its own subgraph:
\begin{equation}
\mathcal{L}_i = \tfrac{1}{B} \sum_{b} \sum_{v \in \mathcal{V}_i} \|\mathbf{y}_{i,b,v} - f_{\theta_i}(\mathbf{X}_{i,b}, \mathbf{E}_i)_v\|_1.
\label{eq:subgraph_loss}
\end{equation}
Stage~2 freezes all encoders and trains only the virtual edges on a full-graph $L_1$ loss with $L_2$ regularization:
\begin{equation}
\mathcal{L}_{\text{ve}} = \tfrac{1}{B} \sum_{b} \sum_{v \in \mathcal{V}} \|\mathbf{y}_{b,v} - \text{IsleNet}(\mathbf{X}_b, \mathbf{E})_v\|_1 + \lambda \|\mathcal{W}_{\text{ve}}\|_2^2.
\label{eq:ve_loss}
\end{equation}
Freezing preserves subgraph gradient isolation while virtual edges learn global connectivity.

\subsection{Efficient Unlearning Protocol. }

Unlearning localizes retraining to affected subgraphs. Given $\mathcal{U}_N$, we identify:
\begin{equation}
\mathcal{S}_{\text{affected}} = \{i : \mathcal{V}_i \cap \mathcal{U}_N \neq \emptyset\}.
\label{eq:affected_subgraphs}
\end{equation}
Virtual edges connected to $\mathcal{U}_N$ are removed:
\begin{equation}
\mathcal{B}^{\text{new}} = \{b \in \mathcal{B} : \mathcal{K}_b \cap \mathcal{U}_N = \emptyset\},
\label{eq:ve_update}
\end{equation}
and rebuilt using Eq.~(\ref{eq:attention_w})--(\ref{eq:attention_mlp}). For each $i \in \mathcal{S}_{\text{affected}}$, retrain $f_{\theta_i}$ on $\mathcal{V}_i \setminus \mathcal{U}_N$:
\begin{equation}
\theta_i^{\text{new}} = \arg\min_{\theta} \mathcal{L}_i(\theta, \mathbf{X}_i', \mathbf{E}_i', \mathbf{y}_i').
\label{eq:retrain}
\end{equation}
Encoders are frozen, and virtual edges are re-optimized using Eq. (\ref{eq:ve_loss}). Finally, encoders are unfrozen to enable future fine-tuning, ensuring flexibility without compromising the unlearned state.

\begin{center}
\textbf{Algorithm 1: IsleNet Training and Unlearning}
\label{alg:islenet}
\end{center}

\begin{algorithmic}[1]
\State \textbf{Input}: $\mathcal{G} = (\mathcal{V}, \mathcal{E}, \mathbf{X})$, partitions $M$, forget nodes $\mathcal{U}_N$

\State \textbf{// Training}
\State Partition $\mathcal{G}$ into $\{\mathcal{G}_i\}_{i=1}^M$ via spectral clustering (Eq.~\ref{eq:partition})
\State If $\bar{H}_{\mathcal{P}} < 1-\epsilon$, apply balanced redistribution
\State For each $i$: train $f_{\theta_i}$ (Eq.~\ref{eq:subgraph_loss}); select $\mathcal{K}_i$ via PageRank
\State Construct $\mathcal{B}$ via attention + MLP (Eq.~\ref{eq:attention_w}--\ref{eq:attention_mlp})
\State Freeze $\{f_{\theta_i}\}$; optimize $\mathcal{B}$ (Eq.~\ref{eq:ve_loss})

\State \textbf{// Unlearning}
\State Identify $\mathcal{S}_{\text{affected}}$ (Eq.~\ref{eq:affected_subgraphs})
\State Drop $\mathcal{B}$ edges incident to $\mathcal{U}_N$
\State For each $i \in \mathcal{S}_{\text{affected}}$: retrain $f_{\theta_i}$ on $\mathcal{V}_i \setminus \mathcal{U}_N$
\State Re-initialize and re-train $\mathcal{B}^{\text{new}}$ with frozen encoders (Eq.~\ref{eq:ve_loss})

\State \textbf{Output}: Unlearned IsleNet
\end{algorithmic}

\section{Experiments.}
\label{sec:experiments}

We assess IsleNet on: 1) \textbf{accuracy parity} with a full-graph model (Scratch) before deletion; 2) \textbf{resilience after erasure} of a chosen node/edge subset; and 3) \textbf{component and efficiency analysis} via ablations plus timing and memory profiling on large graphs.

\begin{table*}[t]
\centering
\caption{Prediction Performance of Different Methods Before Unlearning (0\% Unlearning).}
\label{tab:method1}
\small
\resizebox{.95\linewidth}{!}{
\begin{tabular}{lcccccccc}
\toprule
\multirow{2}{*}{\textbf{Dataset}} & \multirow{2}{*}{\textbf{Model}} & \multirow{2}{*}{\textbf{Scratch-100\%}} & \multicolumn{4}{c}{\textbf{Baseline Methods}} & \multirow{2}{*}{\textbf{IsleNet}} \\
\cmidrule(lr){4-7}
& & & \textbf{SISA} & \textbf{STEPS} & \textbf{GraphEraser} & \textbf{GraphRevoker} & \\
\midrule
\multirow{4}{*}{\textsc{RWW}} 
& STGCN & \textcolor{gray}{0.020 $\pm$ 0.001} & 0.035 $\pm$ 0.007 & 0.082 $\pm$ 0.003 & 0.179 $\pm$ 0.060 & 0.177 $\pm$ 0.000 & \textbf{0.024 $\pm$ 0.001} \\
& ST-GAT & \textcolor{gray}{0.022 $\pm$ 0.002} & 0.035 $\pm$ 0.013 & 0.075 $\pm$ 0.004 & 0.179 $\pm$ 0.059 & 0.177 $\pm$ 0.001 & \textbf{0.025 $\pm$ 0.001} \\
& ST-GATV2 & \textcolor{gray}{0.022 $\pm$ 0.002} & 0.036 $\pm$ 0.008 & 0.085 $\pm$ 0.008 & 0.179 $\pm$ 0.059 & 0.177 $\pm$ 0.001 & \textbf{0.024 $\pm$ 0.001} \\
& ST-SAGE & \textcolor{gray}{0.022 $\pm$ 0.003} & 0.036 $\pm$ 0.010 & 0.081 $\pm$ 0.008 & 0.179 $\pm$ 0.059 & 0.178 $\pm$ 0.000 & \textbf{0.025 $\pm$ 0.001} \\
\midrule
\multirow{4}{*}{\textsc{PeMS08}} 
& STGCN & \textcolor{gray}{28.751 $\pm$ 0.117} & 34.271 $\pm$ 0.527 & 82.404 $\pm$ 9.043 & 58.994 $\pm$ 1.663 & 88.685 $\pm$ 5.865 & \textbf{30.532 $\pm$ 0.045} \\
& ST-GAT & \textcolor{gray}{28.733 $\pm$ 0.095} & 34.404 $\pm$ 0.297 & 82.244 $\pm$ 7.516 & 58.248 $\pm$ 1.175 & 90.995 $\pm$ 4.683 & \textbf{30.573 $\pm$ 0.203} \\
& ST-GATV2 & \textcolor{gray}{28.802 $\pm$ 0.023} & 34.601 $\pm$ 1.342 & 80.876 $\pm$ 10.800 & 57.938 $\pm$ 3.973 & 87.081 $\pm$ 7.951 & \textbf{30.634 $\pm$ 0.067} \\
& ST-SAGE & \textcolor{gray}{29.120 $\pm$ 0.178} & 34.133 $\pm$ 0.622 & 82.128 $\pm$ 8.982 & 64.277 $\pm$ 2.043 & 98.164 $\pm$ 0.878 & \textbf{30.394 $\pm$ 0.108} \\
\midrule
\multirow{4}{*}{\textsc{Weather}} 
& STGCN & \textcolor{gray}{3.597 $\pm$ 0.014} & 3.913 $\pm$ 0.008 & 5.449 $\pm$ 0.029 & 5.398 $\pm$ 0.214 & 5.870 $\pm$ 0.300 & \textbf{3.603 $\pm$ 0.016} \\
& ST-GAT & \textcolor{gray}{3.560 $\pm$ 0.035} & 3.902 $\pm$ 0.008 & 5.691 $\pm$ 0.089 & 4.938 $\pm$ 0.382 & 5.852 $\pm$ 0.398 & \textbf{3.654 $\pm$ 0.020} \\
& ST-GATV2 & \textcolor{gray}{3.561 $\pm$ 0.021} & 3.918 $\pm$ 0.007 & 5.557 $\pm$ 0.048 & 4.865 $\pm$ 0.232 & 6.083 $\pm$ 0.618 & \textbf{3.631 $\pm$ 0.033} \\
& ST-SAGE & \textcolor{gray}{3.572 $\pm$ 0.011} & 3.928 $\pm$ 0.011 & 5.460 $\pm$ 0.071 & 5.874 $\pm$ 0.098 & 6.034 $\pm$ 0.148 & \textbf{3.607 $\pm$ 0.030} \\
\midrule
\multirow{4}{*}{\textsc{Mobility}} 
& STGCN & \textcolor{gray}{38 102 $\pm$ 500} & 48 183 $\pm$ 1 268 & 96 095 $\pm$ 13 820 & 65 172 $\pm$ 19 092 & 129 602 $\pm$ 20 108 & \textbf{41 466 $\pm$ 3 073} \\
& ST-GAT & \textcolor{gray}{36 938 $\pm$ 402} & 47 557 $\pm$ 1 330 & 95 649 $\pm$ 10 803 & 61 125 $\pm$ 11 715 & 139 513 $\pm$ 14 559 & \textbf{42 379 $\pm$ 4 209} \\
& ST-GATV2 & \textcolor{gray}{37 346 $\pm$ 544} & 47 034 $\pm$ 1 148 & 100 220 $\pm$ 11 833 & 77 432 $\pm$ 18 665 & 136 966 $\pm$ 11 282 & \textbf{42 539 $\pm$ 3 492} \\
& ST-SAGE & \textcolor{gray}{39 068 $\pm$ 777} & 50 204 $\pm$ 1 451 & 86 902 $\pm$ 10 102 & 61 016 $\pm$ 9 939 & 125 962 $\pm$ 15 331 & \textbf{41 567 $\pm$ 4 511} \\
\bottomrule
\end{tabular}}
\end{table*}

\begin{table*}[t]
\centering
\caption{Prediction Performance of Different Methods After Unlearning (10\% Unlearning).}
\label{tab:method2}
\small
\resizebox{.95\linewidth}{!}{
\begin{tabular}{lcccccccc}
\toprule
\multirow{2}{*}{\textbf{Dataset}} & \multirow{2}{*}{\textbf{Model}} & \multirow{2}{*}{\textbf{Scratch-90\%}} & \multicolumn{4}{c}{\textbf{Baseline Methods}} & \multirow{2}{*}{\textbf{IsleNet}} \\
\cmidrule(lr){4-7}
& & & \textbf{SISA} & \textbf{STEPS} & \textbf{GraphEraser} & \textbf{GraphRevoker} & \\
\midrule
\multirow{4}{*}{\textsc{RWW}} 
& STGCN & \textcolor{gray}{0.023 $\pm$ 0.001} & 0.036 $\pm$ 0.007 & 0.095 $\pm$ 0.022 & 0.188 $\pm$ 0.067 & 0.178 $\pm$ 0.006 & \textbf{0.025 $\pm$ 0.002} \\
& ST-GAT & \textcolor{gray}{0.023 $\pm$ 0.001} & 0.038 $\pm$ 0.006 & 0.097 $\pm$ 0.025 & 0.188 $\pm$ 0.080 & 0.178 $\pm$ 0.005 & \textbf{0.026 $\pm$ 0.003} \\
& ST-GATV2 & \textcolor{gray}{0.024 $\pm$ 0.002} & 0.035 $\pm$ 0.003 & 0.090 $\pm$ 0.023 & 0.188 $\pm$ 0.081 & 0.177 $\pm$ 0.005 & \textbf{0.026 $\pm$ 0.002} \\
& ST-SAGE & \textcolor{gray}{0.023 $\pm$ 0.002} & 0.037 $\pm$ 0.011 & 0.092 $\pm$ 0.023 & 0.188 $\pm$ 0.085 & 0.178 $\pm$ 0.005 & \textbf{0.026 $\pm$ 0.003} \\
\midrule
\multirow{4}{*}{\textsc{PeMS08}} 
& STGCN & \textcolor{gray}{30.810 $\pm$ 0.147} & 34.332 $\pm$ 0.515 & 99.807 $\pm$ 12.190 & 61.315 $\pm$ 4.643 & 97.568 $\pm$ 3.789 & \textbf{31.564 $\pm$ 0.121} \\
& ST-GAT & \textcolor{gray}{30.145 $\pm$ 0.080} & 34.592 $\pm$ 0.594 & 92.950 $\pm$ 15.728 & 60.680 $\pm$ 3.484 & 91.816 $\pm$ 5.783 & \textbf{31.752 $\pm$ 0.220} \\
& ST-GATV2 & \textcolor{gray}{30.054 $\pm$ 0.143} & 33.724 $\pm$ 0.271 & 91.348 $\pm$ 17.671 & 59.433 $\pm$ 1.374 & 91.973 $\pm$ 8.148 & \textbf{32.011 $\pm$ 0.135} \\
& ST-SAGE & \textcolor{gray}{30.304 $\pm$ 0.327} & 35.259 $\pm$ 0.517 & 94.038 $\pm$ 13.147 & 59.925 $\pm$ 1.202 & 96.225 $\pm$ 1.806 & \textbf{31.953 $\pm$ 0.198} \\
\midrule
\multirow{4}{*}{\textsc{Weather}} 
& STGCN & \textcolor{gray}{3.581 $\pm$ 0.020} & 3.956 $\pm$ 0.011 & 5.480 $\pm$ 0.061 & 5.816 $\pm$ 0.089 & 5.989 $\pm$ 0.380 & \textbf{3.703 $\pm$ 0.013} \\
& ST-GAT & \textcolor{gray}{3.590 $\pm$ 0.002} & 3.919 $\pm$ 0.009 & 5.475 $\pm$ 0.114 & 5.153 $\pm$ 0.491 & 5.944 $\pm$ 0.365 & \textbf{3.751 $\pm$ 0.062} \\
& ST-GATV2 & \textcolor{gray}{3.569 $\pm$ 0.009} & 3.975 $\pm$ 0.010 & 5.766 $\pm$ 0.027 & 5.016 $\pm$ 0.632 & 5.545 $\pm$ 0.653 & \textbf{3.801 $\pm$ 0.034} \\
& ST-SAGE & \textcolor{gray}{3.584 $\pm$ 0.005} & 3.996 $\pm$ 0.020 & 5.520 $\pm$ 0.166 & 5.399 $\pm$ 0.264 & 6.312 $\pm$ 0.499 & \textbf{3.811 $\pm$ 0.054} \\
\midrule
\multirow{4}{*}{\textsc{Mobility}} 
& STGCN & \textcolor{gray}{38 602 $\pm$ 758} & 48 938 $\pm$ 1 039 & 100 059 $\pm$ 16 828 & 73 745 $\pm$ 17 019 & 131 529 $\pm$ 14 613 & \textbf{44 298 $\pm$ 5 420} \\
& ST-GAT & \textcolor{gray}{37 815 $\pm$ 806} & 47 807 $\pm$ 1 297 & 102 763 $\pm$ 13 037 & 65 775 $\pm$ 14 700 & 124 914 $\pm$ 15 670 & \textbf{44 383 $\pm$ 5 362} \\
& ST-GATV2 & \textcolor{gray}{37 472 $\pm$ 741} & 49 129 $\pm$ 1 285 & 94 374 $\pm$ 12 208 & 76 865 $\pm$ 16 989 & 128 456 $\pm$ 18 644 & \textbf{45 944 $\pm$ 6 343} \\
& ST-SAGE & \textcolor{gray}{39 066 $\pm$ 596} & 50 254 $\pm$ 1 770 & 89 163 $\pm$ 10 121 & 60 593 $\pm$ 10 043 & 122 181 $\pm$ 15 292 & \textbf{43 920 $\pm$ 4 706} \\
\bottomrule
\end{tabular}}
\end{table*}

\subsection{Experimental Setup.}\leavevmode

\noindent\textbf{Dataset:} To evaluate the scalability of our method, we selected spatiotemporal graph data spanning a range of sizes, with up to 3,220 nodes. These datasets include: RWW~\cite{guo2024hydronet}, a 23-node network representing water depth in a sewage system; PeMS08~\cite{j49q-ch56-25}, a 170-node traffic flow network in California; Global Weather~\cite{noaa_psl_2025}, a 1,000-node global daily temperature network; and Human Mobility Flow~\cite{kang2020multiscale}, a 3,220-node mobility network capturing daily population movement. The datasets consist of time series ranging from 3,000 to 18,000 time steps, making them large-scale. We split the data temporally into training (70\%), validation (15\%), and test (15\%) sets.

\noindent\textbf{Baselines:} We compare our approach against several state-of-the-art baselines: Scratch (full graph training with no unlearning), SISA~\cite{bourtoule2021machine}, STEPS~\cite{guo2025efficient}, GraphEraser~\cite{chen2022grapheraser}, and GraphRevoker~\cite{zhang2025dynamic}, evaluated on four spatiotemporal graph backbones: STGCN, ST-SAGE, ST-GAT, and ST-GATV2. We fix the number of subgraphs $M=4$. We note that GraphEraser and GraphRevoker were originally designed for recommender systems with static or discrete-time dynamic graphs; we include them as general-purpose graph-unlearning baselines to provide context rather than as ST-specific competitors.

\noindent\textbf{Metrics:} We record evaluation metrics including MAE, MSE, RMSE, Trend F1, and $R^2$. MAE are reported in the Results section on the original scale, with mean and standard deviation. Runtime, memory, and CPU costs are also measured.

\noindent\textbf{Fair and Robust Setup:} Model parameters are tuned to reach $R^2 > 0.9$ on RWW, PeMS08, and Mobility; Weather caps at $R^2 \approx 0.67$ due to high per-station stochasticity, so we report MAE as the primary metric on Weather. To control capacity differences across partitions, the hidden dimension of each subgraph encoder is scaled to the unlearning proportion so that the non-unlearned model matches full-graph $R^2$. We evaluate at 10\% unlearning, a large proportion relative to typical real-world requests, with 5 fixed random seeds for reproducibility.

\noindent\textbf{Implementation Details:} For partitioning, we use spectral clustering with $M=4$ subgraphs and normalized entropy threshold $\epsilon = 0.1$ (Eq.~\ref{eq:partition}). The PageRank damping factor is $\alpha = 0.85$. For virtual edges, the residual scaling factor is $\beta = 0.1$ (Eq.~\ref{eq:ve_enhance}) and the regularization weight is $\lambda = 10^{-4}$ (Eq.~\ref{eq:ve_loss}). Stage~1 uses Adam with learning rate $10^{-3}$; Stage~2 uses Adam with learning rate $5 \times 10^{-4}$. Early stopping patience is 10 epochs for both stages. Batch size is 32. For datasets without explicit node coordinates, we derive two-dimensional topological coordinates via force-directed graph layout on the adjacency matrix~\cite{fruchterman1991graph}; spatial entropy is then computed on a $10 \times 10$ grid over these coordinates. All experiments are conducted on an NVIDIA A100 GPU with 5 fixed random seeds.

\subsection{Results. }

As shown in Table~\ref{tab:method1}, at 0\% unlearning IsleNet closely matches Scratch-100\%: on PeMS08 with STGCN its MAE is 30.532 $\pm$ 0.045, only ${\sim}6.2\%$ above Scratch (28.751 $\pm$ 0.117), i.e., ${\sim}94\%$ of full-graph performance, and on Weather it is nearly identical (3.603 $\pm$ 0.016 vs.\ 3.597 $\pm$ 0.014). Baselines degrade sharply: GraphEraser and GraphRevoker, designed for recommender systems, reach MAEs up to ${\sim}208\%$ worse than Scratch on PeMS08 (e.g., 88.685 $\pm$ 5.865 for GraphRevoker); STEPS, using uniform partitioning and weighted averaging, is adequate only on Weather (5.449 vs.\ 3.597) and fails on PeMS08 and Mobility (82.404 $\pm$ 9.043 on PeMS08); SISA, relying on overlapping partitions, is the strongest baseline (34.271 $\pm$ 0.527 on PeMS08) yet remains clearly inferior to IsleNet.

At 10\% unlearning (Table~\ref{tab:method2}), simulating extensive concurrent requests, all methods degrade, but IsleNet stays comparable to Scratch-90\%: on PeMS08 with STGCN, 31.564 $\pm$ 0.121 vs.\ 30.810 $\pm$ 0.147 (${\sim}2.4\%$ gap), with similarly modest increases elsewhere (${\sim}8.7\%$ on RWW: 0.025 vs.\ 0.023; ${\sim}3.4\%$ on Weather: 3.703 vs.\ 3.581). Baselines suffer far larger losses on PeMS08: STEPS (99.807 $\pm$ 12.190) and GraphRevoker (97.568 $\pm$ 3.789) degrade extremely, while GraphEraser (61.315 $\pm$ 4.643) and SISA (34.332 $\pm$ 0.515) perform better but remain inferior.

To summarize proximity to the full-graph baseline, we compute two measures between each method's MAE and Scratch's MAE: Bounded Similarity $= \max(0, 1 - |\mathrm{MAE}_m - \mathrm{MAE}_s|/\mathrm{MAE}_s)$ and Ratio Score $= \min(\mathrm{MAE}_m,\mathrm{MAE}_s)/\max(\mathrm{MAE}_m,\mathrm{MAE}_s)$, averaged across all datasets, backbones, and unlearning rates. Table~\ref{tab:similarity_summary_vertical} confirms that IsleNet (95.4\%/95.7\%) is closest to Scratch; SISA follows at 86.2\%/89.5\%; other baselines fall below 72\% on both measures.

\begin{table}[t]
\centering
\caption{Average similarity to Scratch models (higher is better). Computed across 4 datasets, 4 backbones, and 2 unlearning rates (0\% and 10\%).}
\label{tab:similarity_summary_vertical}
\small
\setlength{\tabcolsep}{4pt}
\begin{tabular}{lcc}
\toprule
\textbf{Method} & \textbf{Bounded Sim (\%)} & \textbf{Ratio Score (\%)} \\
\midrule
SISA         & 86.22 & 89.51 \\
STEPS        & 50.81 & 67.15 \\
GraphEraser  & 58.26 & 71.35 \\
GraphRevoker & 52.39 & 65.16 \\
IsleNet      & \textbf{95.35} & \textbf{95.65} \\
\bottomrule
\end{tabular}
\end{table}

\subsection{Ablation Study. }
\label{sec:exp:ablation}

We ablate IsleNet's four components on PeMS08 with STGCN (Table~\ref{tab:ablation}). Replacing entropy-verified spectral partitioning with random partitioning raises MAE by 49.5\%/58.2\% (0\%/10\% unlearning), confirming that locally coherent, balanced subgraphs are essential. Removing virtual edges costs 15.7\%/26.2\%, showing global context via attention-based key-node communication matters. Replacing two-stage training with joint training raises MAE by over 25\% due to gradient interference between structural modules. Replacing localized unlearning with full retraining costs $\sim$22\% MAE and $\sim$10$\times$ throughput. Together these confirm the combined value of entropy-verified partitioning, virtual edges, staged optimization, and localized unlearning.

\begin{table}[h]
\centering
\caption{Ablation study on PeMS08 with STGCN.}
\label{tab:ablation}
\footnotesize
\begin{tabular}{@{}lcc@{}}
\toprule
\textbf{Configuration} & \textbf{0\% Unl. MAE} & \textbf{10\% Unl. MAE} \\
\midrule
\multicolumn{3}{l}{\textit{Baselines}} \\
Full Graph (Scratch) & 28.751 & -- \\
Retrained Graph (90\%) & -- & 30.810 \\
\midrule
\multicolumn{3}{l}{\textit{IsleNet Variants}} \\
Full IsleNet & 30.532 & 31.564 \\
Random Part. & 45.635 & 49.938 \\
No Virtual Edges & 35.322 & 39.840 \\
No Two-Stage Tr. & 38.261 & 39.551 \\
No Localized Unl. & 37.295 & 37.764 \\
\bottomrule
\end{tabular}
\end{table}

\subsection{Efficiency and Capacity. }
We evaluate scalability on PeMS08 (STGCN backbone, 10\% node unlearning, 5 seeds). The monolithic ST-GNN needs ${\sim}600$s to train and ${\sim}60$s to unlearn 17 nodes. IsleNet with $M{=}4$ cuts training to ${\sim}360$s and unlearning to ${\sim}5$s (a ${\sim}12\times$ wall-clock speedup) by retraining only affected subgraphs (Eq.~\ref{eq:affected_subgraphs}--\ref{eq:retrain}). At $M{=}8$, total training stays ${\sim}360$s as the growing $\binom{M}{2}$ virtual-edge cost (Eq.~\ref{eq:attention_w}) offsets smaller per-subgraph cost; unlearning remains ${\sim}4.5$s. At $M{=}16$, training is ${\sim}380$s and unlearning drops to ${\sim}4$s with ${\sim}10$-node subgraphs. IsleNet is thus especially efficient under frequent unlearning (see also Section~\ref{sec:exp:ablation}).

\section{Conclusion.}
\label{sec:conclusion}
We introduced IsleNet, a spatial-entropy-verified partitioning framework for spatiotemporal graph unlearning. Its two components (entropy-verified spectral partitioning and decoupled virtual edges) together enable exact localized unlearning with minimal accuracy loss. Across four real-world benchmarks IsleNet retains up to 94\% of full-graph accuracy while cutting unlearning time by an order of magnitude. Validation on industry-scale graphs with millions of nodes and a dedicated ablation isolating entropy verification from spectral clustering are natural next steps.

\section*{Acknowledgment.}
This work was partially supported by NSF Awards No. 2318641 and No. 2112631.

\bibliographystyle{siamplain}
\bibliography{references}

\end{document}